\documentclass[10pt,twocolumn,letterpaper]{article}
\usepackage[letterpaper,margin=0.75in]{geometry}
\usepackage{graphicx}
\usepackage{booktabs}
\usepackage{amsmath}
\usepackage{amssymb}
\usepackage{algorithm}
\usepackage{algorithmic}
\usepackage{newfloat}
\usepackage{listings}
\usepackage[round]{natbib}
\usepackage[hyphens]{url}
\usepackage[hidelinks]{hyperref}
\usepackage{caption}
\usepackage{flushend}
\DeclareCaptionStyle{ruled}{labelfont=normalfont,labelsep=colon,strut=off}
\floatstyle{ruled}
\newfloat{listing}{tb}{lst}{}
\floatname{listing}{Listing}
\newtheorem{proposition}{Proposition}
\newtheorem{remark}{Remark}
\title{\bf Calibration, Not Compilation: Detecting and Repairing\\
Misspecified Probabilistic Programs Written by Language Models}
\author{
  Jian Xu$^{1,2}$ \quad Delu Zeng$^{3}$ \quad John Paisley$^{4}$ \quad Qibin Zhao$^{2}$\\[3pt]
  {\normalsize $^{1}$RIKEN iTHEMS \quad $^{2}$RIKEN AIP \quad
  $^{3}$South China University of Technology \quad $^{4}$Columbia University}\\[1pt]
  {\normalsize \texttt{jian.xu@riken.jp}}
}
\date{}

\begin{document}
\twocolumn[
\begin{@twocolumnfalse}
\maketitle
\vspace{-1.5em}
\begin{abstract}
Language models increasingly write probabilistic programs (in NumPyro, Stan, or Pyro), but a program that compiles, runs, and passes every unit test can still be \emph{statistically} wrong---a Gaussian likelihood for heavy-tailed data, a Poisson for over-dispersed counts, an invalid prior support, or a pathological parameterization. The right verifier is therefore not a test suite but the \emph{Bayesian workflow itself}: posterior predictive checks, simulation-based calibration, sampler diagnostics ($\hat R$, divergences, ESS), and held-out predictive density. We study this \emph{calibration oracle} along three axes. \textbf{Detection:} on a benchmark of $14$ misspecification types across $10$ model families ($200$ instances), it flags the bug with AUC $0.97$ ($88\%$ at $2\%$ FPR \emph{when handed the correct reference program, an upper bound})---and a fully \emph{reference-free} version that uses no correct program reaches $62$--$78\%$ (the upper figure from a small automated model search), versus $0\%$ for a unit-test oracle. \textbf{Repair:} used as feedback in an LLM repair loop across fifteen models, calibration significantly outperforms unit-test feedback---which is itself \emph{significantly worse than no feedback at all}, a passing test inducing false confidence that suppresses repair---and improves over no feedback on strong-but-unsaturated models (GPT-5.1 $33{\to}92\%$, Claude $75{\to}100\%$; paired McNemar, $n{=}228$). \textbf{Reality:} on programs LLMs write from scratch for neutral briefs, $15$--$47\%$ of runnable ones are statistically misspecified (unit tests catch none), and calibration-guided repair significantly beats LLM-as-judge review, a Bayesian-workflow checklist, and data-summary self-debug. Across all three, the lesson is the same: for probabilistic programs, correctness is calibration, not compilation.
\end{abstract}
\vspace{1em}
\end{@twocolumnfalse}
]

\section{Introduction}
Probabilistic programming languages (PPLs) such as Stan~\citep{carpenter2017stan}, Pyro/NumPyro~\citep{bingham2019pyro,phan2019composable}, and PyMC let practitioners specify a generative model and obtain a posterior by general-purpose inference. Writing a correct probabilistic program, however, requires statistical expertise: one must choose an appropriate likelihood, give parameters valid support, capture the dispersion and tail behavior of the data, and pick a parameterization that the sampler can actually explore. Large language models are increasingly used to draft such programs, and a tempting deployment pattern is to wrap the LLM in an agentic loop that ``tests'' its output and iterates.

The difficulty is that the usual notion of a test does not apply. A probabilistic program is not correct because it runs; it is correct because its posterior is well-calibrated and its predictions match held-out data. A model that uses a Gaussian likelihood for heavy-tailed data, a Poisson likelihood for over-dispersed counts, or a centered parameterization for a hierarchical funnel will compile, run under NUTS, return finite samples, and pass every conventional unit test---while being statistically wrong in a way that no amount of code inspection on a neutral task description can reveal. We call such errors \emph{code-invisible} misspecification.

Our central claim is that the right verifier for LLM-written probabilistic programs is the \emph{Bayesian workflow}~\citep{gelman2020bayesian}: simulation-based calibration (SBC)~\citep{talts2018validating}, posterior predictive checks (PPC)~\citep{gelman1996posterior}, sampler diagnostics such as $\hat R$~\citep{vehtari2021rank} and divergent transitions~\citep{hoffman2014no}, and held-out predictive density. These quantities form a \emph{calibration oracle} that, unlike a test suite, is sensitive precisely to statistical error. We study it as both a \emph{detector} (does it flag misspecification a test misses?) and a \emph{repair signal} (does feeding it back to the writer fix the program?), on a controlled benchmark and on programs models write from scratch. In one phrase: for probabilistic programs, correctness is calibration, not compilation.

We make three contributions.
\begin{itemize}
\item \textbf{A misspecification benchmark and the detection result.} We build a benchmark of $14$ misspecification types across $10$ model families ($200$ instances), all of which compile and run. The calibration oracle flags the bug in $88\%$ of cases ($93\%$ of code-invisible ones) at a $2\%$ false-positive rate (detection AUC $0.97$), versus $0\%$ for a unit-test oracle. Each error type lights up the \emph{right} diagnostic, and the oracle further \emph{localizes} the bug type with $75\%$ top-2 accuracy.
\item \textbf{It does not need a correct program.} A \emph{reference-free} oracle (PPC + sampler + SBC) detects $62\%$ of bugs; adding a classical GLM baseline reaches $68\%$, and an automated model search (LOO over a small standard library) reaches $78\%$---all without a hand-written correct program. The $88\%$ ``reference'' figure is an upper bound that assumes the correct program is known. An ablation shows held-out density is the strongest single component, while SBC alone cannot detect model--data misspecification.
\item \textbf{Calibration-as-oracle repair.} Used as feedback in an LLM repair loop and compared against no feedback (self-refine) and unit-test feedback, calibration is the strongest signal at every capability level able to use it, lifting GPT-5.1 from $33\%\!\to\!92\%$ and Claude Sonnet~4.6 from $75\%\!\to\!100\%$ on code-invisible bugs across fifteen models.
\item \textbf{Two robust phenomena.} (a)~Unit-test feedback is \emph{harmful}: ``all checks pass'' suppresses repair, so test feedback is $\le$ no feedback (Claude: $25\%$ vs.\ $75\%$). (b)~The benefit of calibration feedback is \emph{non-monotone} in capability---negligible for models too weak to act on a diagnostic or strong enough to repair blindly, and largest for strong-but-unsaturated models (GPT-5.1, Claude).
\item \textbf{Real, not just injected, failures.} On programs LLMs write from scratch for neutral briefs, $15$--$47\%$ of runnable ones are statistically misspecified (unit tests flag none), and calibration repair significantly beats LLM-as-judge review, a Bayesian checklist, and data-summary self-debug ($p<0.05$, paired McNemar).
\end{itemize}

\section{Related Work}
\paragraph{Bayesian workflow and model criticism.}
Posterior predictive checks~\citep{gelman1996posterior}, simulation-based calibration~\citep{talts2018validating}, improved $\hat R$ and tail diagnostics~\citep{vehtari2021rank}, and divergence-based geometry diagnostics for HMC/NUTS~\citep{hoffman2014no,betancourt2017conceptual} are the standard tools for criticizing a fitted Bayesian model, codified as the Bayesian workflow~\citep{gelman2020bayesian}. We repurpose these tools as an automated oracle and repair signal rather than a human-facing diagnostic dashboard.

\paragraph{LLMs for probabilistic and statistical modeling.}
Recent work uses LLMs to elicit priors~\citep{huang2025llm}, to quantify uncertainty of LLM-based systems by treating prompts as Bayesian parameters~\citep{ross2025textual}, and to draft or discover probabilistic models. These efforts largely assume the generated model is structurally acceptable; we instead target the detection and \emph{repair} of statistical misspecification using calibration as the criterion.

\paragraph{LLM code repair and self-improvement.}
Iterative self-correction~\citep{madaan2023self} and test-driven program repair use execution traces or failing unit tests as the feedback signal. For ordinary software this is appropriate; for probabilistic programs the analogous ``it runs and returns numbers'' signal is exactly the false all-clear we show to be harmful. Our contribution is to replace the test oracle with a calibration oracle.

\section{Method}
Figure~\ref{fig:pipeline} summarizes our approach: an LLM drafts a probabilistic program, a sampler fits it, a \emph{calibration oracle} criticizes the fit, and---if the fit is not calibrated---a structured diagnostic is fed back for repair. We now formalize each component.

\begin{figure*}[t]
\centering
\includegraphics[width=0.97\textwidth]{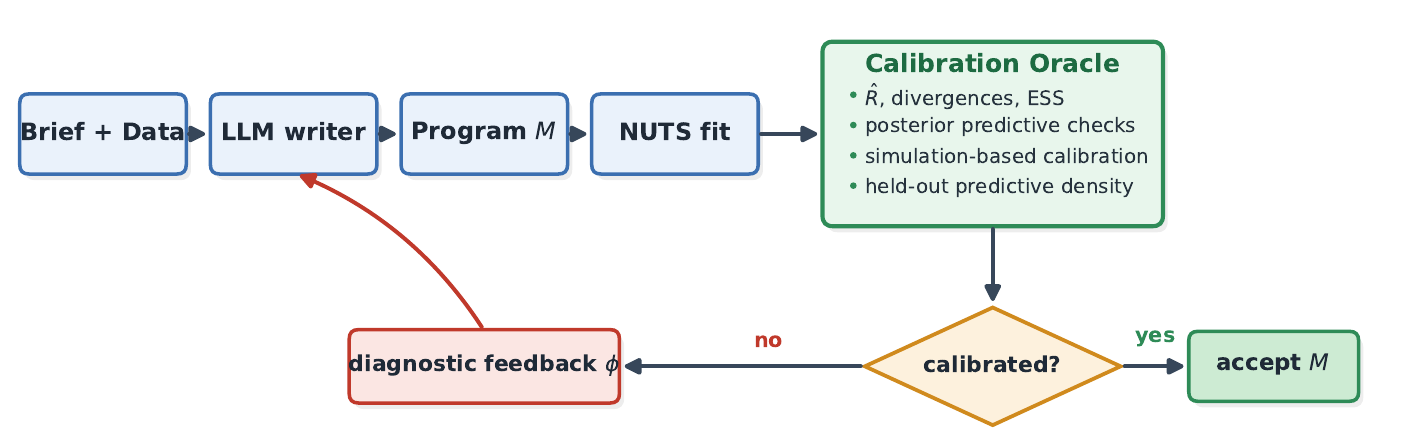}
\caption{Calibration-as-oracle repair. The verifier is the Bayesian workflow, not a test suite. An LLM drafts a probabilistic program, NUTS fits it, and the calibration oracle aggregates sampler geometry, posterior predictive checks, simulation-based calibration, and held-out predictive density. If the fit is not calibrated, the oracle returns a structured diagnostic $\phi$ naming \emph{which} aspect of the data the model fails to reproduce, and the writer repairs the program; correct programs are a fixpoint (Remark~\ref{rem:fix}).}
\label{fig:pipeline}
\end{figure*}

\subsection{Problem Formalization}
A task is a tuple $(\,b,\,\mathcal{D}_{\mathrm{tr}},\,\mathcal{D}_{\mathrm{te}})$ with a natural-language brief $b$ and data drawn from an unknown generating process $y\sim p^\star$. A candidate program $M$ defines a joint density $p_M(y,\theta)=p_M(\theta)\,p_M(y\mid\theta)$ over parameters $\theta$ and observations $y$; fitting yields a posterior $\hat\pi(\theta)\approx p_M(\theta\mid \mathcal{D}_{\mathrm{tr}})$ and a posterior predictive $p_M(\tilde y\mid \mathcal{D}_{\mathrm{tr}})=\int p_M(\tilde y\mid\theta)\,\hat\pi(\theta)\,d\theta$. We call $M$ \emph{statistically correct} if its predictive distribution is indistinguishable from $p^\star$ under a chosen family of discrepancies, and \emph{misspecified} otherwise. Importantly, two programs can induce very different $p_M$ while being \emph{syntactically} interchangeable: e.g.\ \texttt{Poisson} vs.\ \texttt{NegativeBinomial} differ in one token but encode different dispersion.

\subsection{The Calibration Oracle}
The oracle $O(M,\mathcal{D})$ aggregates four families of Bayesian-workflow diagnostics.

\paragraph{Posterior predictive checks.} For a test statistic $T$, the two-sided posterior-predictive tail probability is
\begin{equation}
p_T \;=\; \min\!\big(u,\,1-u\big),\quad
u=\Pr_{\tilde y\sim p_M(\cdot\mid\mathcal D_{\mathrm{tr}})}\!\big[T(\tilde y)\ge T(y_{\mathrm{obs}})\big],
\label{eq:ppc}
\end{equation}
estimated from posterior-predictive replicates over $T\in\{\mathrm{mean},\mathrm{sd},\mathrm{max},\#\text{zeros}\}$. Under correct specification $u$ is approximately uniform, so $p_T$ concentrates near $\tfrac12$; a value near $0$ indicates the model cannot reproduce that aspect of the data.

\paragraph{Simulation-based calibration.} For a scalar $g(\theta)$, draw $\theta_0\sim p_M(\theta)$, $y\sim p_M(\cdot\mid\theta_0)$, fit, and form the rank of the prior draw among $S$ posterior draws,
\begin{equation}
r=\sum_{s=1}^{S}\mathbf{1}\!\left[g(\theta_s)<g(\theta_0)\right],\qquad
r\sim\mathrm{Uniform}\{0,1,\dots,S\}
\label{eq:sbc}
\end{equation}
where the uniformity holds exactly when inference is exact and the model is well-specified~\citep{talts2018validating}; systematic deviation of the rank histogram flags inference or specification error.

\paragraph{Sampler geometry.} The split-$\hat R$~\citep{vehtari2021rank}, minimum bulk ESS, and the number of divergent transitions of NUTS~\citep{hoffman2014no,betancourt2017conceptual} detect parameterizations whose posterior geometry the sampler cannot traverse (e.g.\ the hierarchical funnel).

\paragraph{Held-out predictive density.} The pointwise log predictive density
\begin{equation}
\mathrm{lppd}(M)=\frac1{|\mathcal D_{\mathrm{te}}|}\sum_{i\in\mathcal D_{\mathrm{te}}}\log \frac1S\sum_{s=1}^{S} p_M(y_i\mid\theta_s)
\label{eq:lppd}
\end{equation}
is a Monte-Carlo estimate of the log score, a strictly proper scoring rule~\citep{gneiting2007strictly}. By Gibbs' inequality,
\begin{equation}
\mathbb{E}_{y\sim p^\star}\!\big[\log p_M(y)\big]\;\le\;\mathbb{E}_{y\sim p^\star}\!\big[\log p^\star(y)\big],
\label{eq:gibbs}
\end{equation}
with equality iff $p_M=p^\star$; hence a misspecified $M$ has strictly lower expected lppd than the reference and the ranking is consistent as $|\mathcal D_{\mathrm{te}}|\to\infty$.

\paragraph{Verdict.} Writing $\mathbf 1[\cdot]$ for the indicator, the calibration oracle returns
\begin{equation}
\begin{aligned}
O(M,\mathcal D)=\ &\mathbf 1[\text{ran}]\,\wedge\,\mathbf 1[\hat R<1.05]\,\wedge\,\mathbf 1[\#\mathrm{div}\le 15]\\
&\wedge\,\mathbf 1[\textstyle\min_T p_T\ge\alpha]\,\wedge\,\mathbf 1[\mathrm{lppd}\ge\mathrm{lppd}^\star-\delta],
\end{aligned}
\label{eq:verdict}
\end{equation}
with $\alpha=0.01$, $\delta=0.15$ nats, and $\mathrm{lppd}^\star$ the reference. The competing \emph{unit-test oracle} $O_{\mathrm{u}}$ returns ``pass'' iff the program compiles, runs, and yields finite samples of the right shape.

\subsection{Why Tests Fail and Calibration Succeeds}
The two statements below are elementary---they formalize intuition rather than contribute new theory; our contribution is the framing, benchmark, and empirical study, not the propositions themselves.
\begin{proposition}[Blindness of the test oracle]
\label{prop:blind}
$O_{\mathrm{u}}(M,\mathcal D)$ is a function only of the program's executability and output type, not of $p_M$ or $p^\star$. Hence for any two executable programs $M_1,M_2$ returning finite samples of the same shape, $O_{\mathrm{u}}(M_1)=O_{\mathrm{u}}(M_2)$ regardless of whether either is misspecified. On a family of misspecifications that all execute (our \emph{code-invisible} class), $O_{\mathrm{u}}$ is therefore uninformative: its detection rate equals its base ``pass'' rate, independent of correctness.
\end{proposition}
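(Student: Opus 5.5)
The argument rests only on the definition of $O_{\mathrm{u}}$ given after \eqref{eq:verdict}: it returns ``pass'' iff the program compiles, runs, and yields finite samples of the right shape. I will write the test oracle as a composition. Let $E(M)\in\{0,1\}$ indicate that $M$ compiles and runs to completion. Let $F(M)\in\{0,1\}$ indicate that every returned sample is finite and has the declared shape. Then
\begin{equation*}
O_{\mathrm{u}}(M,\mathcal D)=h\big(E(M),F(M)\big),\qquad h(e,f)=e\wedge f .
\end{equation*}
The arguments of $h$ are determined by the program's execution trace and the type signature of its output. They do not depend on the numerical values of the density $p_M$ beyond finiteness, and they never reference $p^\star$, $\mathcal D_{\mathrm{te}}$, or any discrepancy. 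The first claim of Proposition~\ref{prop:blind} is then immediate: $O_{\mathrm{u}}$ factors through the map $M\mapsto(E(M),F(M))$.

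For the second claim, take executable $M_1,M_2$ that return finite samples of the same shape. Then $E(M_1)=E(M_2)=1$ and $F(M_1)=F(M_2)=1$, so $O_{\mathrm{u}}(M_1)=O_{\mathrm{u}}(M_2)=1$. Correctness as defined in the Problem Formalization is a property of $p_M$ relative to $p^\star$. Since $h$ never takes it as an argument, the equality holds whether $M_1$, $M_2$, both, or neither is misspecified. A concrete witness makes this vivid: \texttt{Poisson} versus \texttt{NegativeBinomial} on over-dispersed counts gives two programs with the same $(E,F)$ but different correctness status.

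For the third claim, I will formalize ``uninformative'' probabilistically. Let $\mathcal C$ be a distribution over programs drawn from the code-invisible family, together with a correctness label $Z=\mathbf 1[M\text{ misspecified}]$. By the definition of the class, every program in the support executes with finite, correctly shaped output. Hence $O_{\mathrm{u}}\equiv 1$ on that support, and $O_{\mathrm{u}}$ is independent of $Z$. Therefore
\begin{equation*}
\Pr[O_{\mathrm{u}}=\text{fail}\mid Z=1]=\Pr[O_{\mathrm{u}}=\text{fail}\mid Z=0]=\Pr[O_{\mathrm{u}}=\text{fail}],
\end{equation*}
so the detection rate equals the false-positive rate, which equals the base rate. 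Here the base rate is $0$, matching the empirical $0\%$. I will also note the more general statement. If the family is enlarged to allow stochastic runtime failures, then provided $(E,F)$ is independent of $Z$, the same equalities still hold with a nonzero base rate. The resulting ROC is the diagonal, with AUC $\tfrac12$.

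The only delicate step is the hidden assumption that $F$ does not secretly depend on $p_M$. A Gaussian likelihood with invalid support could, for instance, produce NaNs. I will handle this by making explicit that the code-invisible class is defined precisely so that $E=F=1$ throughout. The proposition's claim of independence is then conditional on that class, and outside it the test oracle can be informative only through executability failures.
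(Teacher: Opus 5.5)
Your proposal is correct and follows essentially the same route as the paper's proof: factor $O_{\mathrm u}$ through the execution trace (compiles, runs, finite, shape), observe that this trace does not involve $p_M$ or $p^\star$, and note that the code-invisible class shares the passing trace by construction, so the detection rate is $1-(\text{pass rate})=0$ regardless of correctness. Your probabilistic phrasing of ``uninformative'' (independence of $O_{\mathrm u}$ from the label $Z$) and your explicit caveat are somewhat more careful than the paper, which asserts without qualification that the trace is independent of $p_M$; the caveat is that finiteness of the output is guaranteed by the class definition rather than by the trace itself. Neither refinement changes the argument.
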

\noindent This is immediate from the definition of $O_{\mathrm u}$, but it is the crux: a passing test conveys \emph{zero} information about statistical correctness for code-invisible bugs.
\begin{proposition}[Detection by calibration]
\label{prop:detect}
Let $M$ be misspecified in a statistic $T$, i.e.\ $T(y_{\mathrm{obs}})$ lies in the tails of $p_M(T(\tilde y)\mid \mathcal D_{\mathrm{tr}})$. Then as the number of posterior-predictive replicates grows, the estimated $p_T\to \min(u,1-u)$ with $u\to 0$ or $1$, so $O$ flags $M$ at any fixed threshold $\alpha>0$. Analogously, a parameterization with unreachable geometry yields $\hat R>1$ and divergences bounded away from $0$, and a misspecified likelihood yields strictly lower expected lppd than the reference. Thus $O$ is a consistent detector for misspecification expressed through its diagnostics.
\end{proposition}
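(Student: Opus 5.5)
The proof splits along the three diagnostic families in the verdict~\eqref{eq:verdict}. Because $O$ is a conjunction, it flags $M$ as soon as any single conjunct fails. So it suffices to show that, for each misspecification mode, the corresponding conjunct fails with probability tending to one in the relevant asymptotic regime. There are three such regimes: the number of posterior-predictive replicates $R\to\infty$ for the PPC, the number of draws and chains for the sampler diagnostics, and $|\mathcal D_{\mathrm{te}}|\to\infty$ for lppd.

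\textbf{PPC part.} Condition on $y_{\mathrm{obs}}$ and $\mathcal D_{\mathrm{tr}}$. The Monte-Carlo estimate $\hat u_R=\frac1R\sum_{r}\mathbf 1[T(\tilde y^{(r)})\ge T(y_{\mathrm{obs}})]$ is an average of i.i.d.\ Bernoulli$(u)$ variables. By the strong law it converges to $u$, and by continuity of $x\mapsto\min(x,1-x)$ we get $\hat p_T\to\min(u,1-u)$. I will make ``lies in the tails'' precise as $\min(u,1-u)<\alpha$; the informal ``$u\to0$ or $1$'' I read as referring to the data regime in which $T(y_{\mathrm{obs}})$ moves further into the tail. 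Once $\min(u,1-u)<\alpha$ holds strictly, Hoeffding's inequality gives $\Pr[\hat p_T\ge\alpha]\le 2e^{-2R(\alpha-\min(u,1-u))^2}\to0$. The conjunct $\mathbf 1[\min_T p_T\ge\alpha]$ therefore fails with probability tending to one.

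\textbf{Geometry part.} Take ``unreachable geometry'' to mean that the chains do not mix. Concretely, the between-chain variance $B$ stays bounded away from zero relative to the within-chain variance $W$ as draws grow, since chains stuck in different regions (for example the funnel neck versus its mouth) keep different means. Then $\hat R=\sqrt{(\frac{n-1}{n}W+\frac1nB)/W}$ converges to a limit strictly above $1$, and I will assume that limit exceeds $1.05$. For divergences, I will assume that each transition diverges with probability bounded below by some $\epsilon>0$, as happens when curvature exceeds the step size on a set of positive posterior mass. The divergence count then grows linearly and exceeds $15$ eventually. This is the \textbf{main obstacle}: ``unreachable'' has no formal meaning, so this step amounts to stating these as hypotheses rather than deriving them. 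I would present them explicitly as the definition of pathological geometry.

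\textbf{lppd part.} Use Gibbs' inequality~\eqref{eq:gibbs}. Let $\Delta=\mathbb E_{p^\star}[\log p^\star]-\mathbb E_{p^\star}[\log p_M(\cdot\mid\mathcal D_{\mathrm{tr}})]>0$. The quantity $\mathrm{lppd}(M)-\mathrm{lppd}^\star$ is an empirical average over test points, so by the law of large numbers, assuming finite second moments of the log densities, it converges to $-\Delta$ plus the Monte-Carlo error from $S$ posterior draws, which vanishes as $S\to\infty$. Detection at threshold $\delta$ therefore requires $\Delta>\delta$. I will state this caveat explicitly: the argument is consistent only for misspecifications whose KL gap exceeds $\delta$, or for any positive gap if $\delta$ is allowed to shrink.

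Combining the three parts, each mode makes some conjunct fail with probability tending to one, which gives consistency of $O$ for misspecification expressed through its diagnostics.
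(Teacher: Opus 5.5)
Your proposal follows the paper's proof essentially step for step. It uses the same three-channel split, SLLN convergence of $\hat u$ with ``tails'' made precise as $\min(u,1-u)<\alpha$, the Gibbs/KL gap with the explicit requirement $\Delta>\delta$, and a per-draw divergence probability bounded below together with non-mixing chains for $\hat R$. Your Hoeffding rate, and your choice to state the geometry conditions as hypotheses (including that the $\hat R$ limit exceeds $1.05$), are if anything more explicit than the paper's heuristic funnel-curvature argument.

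One notational fix is needed. In the formula you wrote, $B$ is $n$ times the variance of the chain means. So the condition you need is that $B/(nW)$ stays bounded away from zero, i.e.\ the chain means remain separated. Asking only that $B/W$ stays bounded away from zero is too weak, because that already holds for well-mixed chains, where $\hat R\to 1$.
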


\subsection{Bug Taxonomy: Visible vs.\ Invisible}
We distinguish \emph{code-visible} misspecification, diagnosable from the source under a neutral brief (a hard-coded tiny noise scale, an absurd prior, a missing link function), from \emph{code-invisible} misspecification, where the program is a faithful, standard implementation of the brief but wrong \emph{for this data}: a Gaussian likelihood for outlier-laden residuals (should be Student-$t$), a Poisson for over-dispersed counts (negative-binomial), a plain Poisson for zero-inflated counts (zero-inflated), and a centered parameterization for a hierarchical funnel (non-centered). By Proposition~\ref{prop:blind} the invisible class is exactly where a test oracle is powerless, and by Proposition~\ref{prop:detect} exactly where calibration retains power; it is therefore the discriminating class in our experiments. The split is defined \emph{a priori} by code-inspectability under a neutral brief---can a competent programmer flag the bug from the source alone, without seeing the data---and not by any outcome. The clamped/identity \emph{link} error is classed visible because the mistake is literally readable in the source, even though (as Figure~\ref{fig:detbytype} shows) it is among the \emph{hardest} for the calibration oracle to catch from data; including it therefore makes our detection numbers \emph{conservative}, not favorable. We report detection on both classes (Figure~\ref{fig:detbytype}) and focus the repair study on the invisible class only because visible bugs are repaired at $\approx\!100\%$ by every feedback regime and so do not discriminate.

\subsection{Repair Loop}
Given a buggy program $M_0$, at round $t$ we fit $M_t$, query the oracle, and stop if calibrated; otherwise the writer receives the brief, $M_t$, and feedback $\phi_t$ and returns $M_{t+1}$ (Algorithm~\ref{alg:repair}). We compare three feedback maps $\phi$: \textbf{none} (``the model may be misspecified; produce a corrected version''), \textbf{test} (the unit-test verdict), and \textbf{diag} (the calibration diagnostics plus a plain-language reading of \emph{which} statistic is misfit, e.g.\ ``the spread of the data lies in the extreme tail of the posterior predictive,'' \emph{without} prescribing the distributional fix).

\begin{algorithm}[t]
\caption{Calibration-as-oracle repair}
\label{alg:repair}
\begin{algorithmic}[1]
\REQUIRE brief $b$, data $(\mathcal D_{\mathrm{tr}},\mathcal D_{\mathrm{te}})$, writer $\mathcal L$, oracle $O$, feedback map $\phi$, budget $K$
\STATE $M \leftarrow \mathcal L(b)$ \quad\COMMENT{or a given buggy program}
\FOR{$t=0$ \TO $K$}
  \STATE fit $M$ with NUTS; compute $O(M,\mathcal D)$
  \IF{$O$ reports calibrated}
     \RETURN $M$ \quad\COMMENT{accept}
  \ENDIF
  \IF{$t<K$}
     \STATE $M \leftarrow \mathcal L\big(b,\,M,\,\phi(O(M,\mathcal D))\big)$ \quad\COMMENT{repair}
  \ENDIF
\ENDFOR
\RETURN $M$ \quad\COMMENT{best effort}
\end{algorithmic}
\end{algorithm}

\begin{remark}[Do-no-harm fixpoint]
\label{rem:fix}
If $M$ is already calibrated, the \textbf{diag} map returns ``no defect'' and the loop terminates with $M$ unchanged: correct programs are a fixpoint. The \textbf{test} map, by contrast, returns ``all checks pass'' for \emph{every} executable program---including misspecified ones---so it can neither trigger a needed repair nor protect a correct model from being perturbed by further editing. This asymmetry predicts the empirical finding that test feedback is no better than, and often worse than, no feedback (Sec.~Results).
\end{remark}

\section{Experimental Setup}
\paragraph{Benchmark.} Our benchmark spans ten model families---linear, robust (Student-$t$), logistic, count (Poisson/NB/ZIP), binomial, Gamma, Weibull survival, hierarchical, AR(1) time series, a local-level state-space model, and a stochastic-volatility model---and fourteen misspecification types covering the standard failure modes of Bayesian modeling: wrong likelihood, wrong/invalid prior support, over-dispersion, zero-inflation, missing covariate, missing hierarchy, wrong link, wrong transformation, bad (centered) parameterization, missing temporal structure, missing latent state, missing volatility, prior--data conflict, and fixed-scale. Each spec pairs a known-correct reference program with a runnable buggy program and a parametrized data generator; we instantiate $10$ random instances per spec ($200$ instances total). Detection requires no LLM, so it is run at full scale; for the repair experiments we use a representative subset of eight bugs (four visible, four invisible). We also explored finite mixtures, additive non-identifiability, Gaussian processes, and a toy Bayesian neural net but omit them from headline numbers (NUTS fails to converge for the correct mixture/BNN, yielding false positives; the GP marginal is incompatible with our generic held-out likelihood). All reported buggy variants compile and run.
\paragraph{Models.} Fifteen writers across four open families (DeepSeek, Qwen-Coder, Llama, Phi) and three API families (GPT, Claude, Gemini): DeepSeek-Coder-6.7B, DeepSeek-V4-flash/pro, Qwen2.5-Coder-7B/32B, Llama-3.1-8B, Phi-4, GPT-4o-mini, GPT-4.1, GPT-5.1, GPT-5.5, Claude Sonnet~4.6, and Gemini-2.5-flash/pro and 3.1-pro. Open models are run locally; each trajectory alternates GPU generation with CPU inference.
\paragraph{Metric.} Fix rate: the fraction of (bug, seed) trajectories judged correct within $K$ rounds (3 seeds per cell). We report fix rate separately for visible and invisible bugs.

\paragraph{Reproducibility.} All inference uses NUTS with $2$ chains and $600$ warmup $+$ $600$ draws (the detection sweep) or $800{+}800$ (the de-risk study), \texttt{jax\_enable\_x64}, fixed PRNG seeds, and the thresholds of Eq.~\eqref{eq:verdict}. Writers are queried at temperature $0.7$ (reasoning models at their default), $\le 1300$ output tokens, repair budget $K{=}3$; the exact system prompt, contract, and feedback templates are released with the code. API models are pinned to fixed snapshots (GPT-4o-mini, GPT-4.1, GPT-5.1, GPT-5.5; Claude Sonnet~4.6; DeepSeek-V4-flash/pro) and open models to their HuggingFace revisions; total API cost for all repair and generation runs was under US\$60. Because frontier snapshots change, we anchor every headline claim to \emph{within-model} comparisons (feedback regime $A$ vs.\ $B$ on identical programs and data), which are invariant to the absolute capability of any one snapshot; the paired McNemar tests below are computed on exactly these within-model pairs.

\section{Results}
\subsection{Detection at Scale}
Across all $200$ instances ($14$ misspecification types, $10$ families) the calibration oracle flags the buggy program in $\mathbf{88\%}$ of cases ($\mathbf{93\%}$ on code-invisible bugs) with a $2\%$ false-positive rate on the correct program, whereas the unit-test oracle detects $\mathbf{0\%}$---every bug runs. Treating the oracle as a continuous detector (Appendix~D) gives an \textbf{AUC of $0.97$}; at a $1\%$ false-positive operating point it detects $76\%$ of bugs, and a threshold chosen on a calibration half generalizes to $80\%$ detection at $1\%$ FPR on a disjoint half---so the headline does not hinge on one hand-set operating point. Figure~\ref{fig:detbytype} breaks detection down by type. It is essentially perfect for likelihood, dispersion, zero-inflation, support, parameterization, transformation, missing-state, and missing-covariate errors; the hard cases are stochastic volatility ($56\%$) and the \emph{visible} identity/clamped \emph{link} error ($26\%$), which often still fits the data's moments. Each error type trips the diagnostic one would expect: PPC tail $p$-values collapse for wrong/under-dispersed likelihoods, $\hat R$ and divergences explode for the centered funnel ($\hat R{=}1.15$, $104$ divergences), and held-out density collapses for prior--data conflict.

\begin{figure*}[t]
\centering
\includegraphics[width=0.92\textwidth]{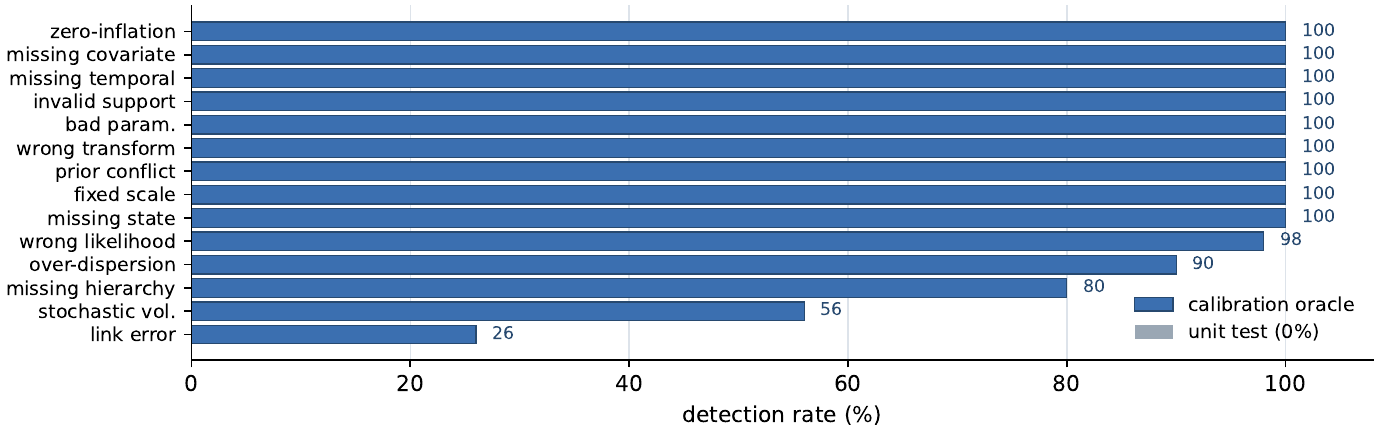}
\caption{Detection rate by misspecification type (calibration oracle vs.\ unit test) over $200$ instances. Calibration catches every type except stochastic volatility and the visible identity/clamped link error; the unit-test oracle catches none.}
\label{fig:detbytype}
\end{figure*}

\subsection{How Much Reference Do You Need?}
A reviewer may object that real deployments lack a hand-written correct program against which to compare held-out density. We therefore separate the oracle into three tiers (Table~\ref{tab:tiers}): a \emph{reference-free} oracle using only PPC, sampler diagnostics, and SBC (no reference at all); a \emph{weak-reference} oracle that additionally compares held-out density to a classical covariate-aware GLM/MLE baseline (a few lines of \texttt{lstsq}/IRLS, \emph{not} a correct PPL); and the full \emph{reference} oracle as an upper bound. Even the reference-free oracle (PPC $+$ sampler $+$ SBC) catches $62\%$ of bugs ($57\%$ invisible) versus $0\%$ for unit tests; a trivial GLM baseline raises this to $68\%$, and---most importantly---an \emph{automated model search} that fits a small library of standard models and uses the best held-out (LOO-style) density as the floor reaches $\mathbf{78\%}$ ($79\%$ invisible), all \emph{without any hand-written correct program}. The $88\%$ ``reference'' number is best read as an \emph{upper bound} that assumes the oracle is handed the correct program; the deployable, reference-free figure is $62$--$78\%$, still far above the $0\%$ of unit tests. Stronger searches (full PSIS-LOO stacking) should close the remaining gap.

\begin{table}[t]
\centering
\small
\resizebox{\columnwidth}{!}{%
\begin{tabular}{lccc}
\toprule
Oracle & detection & invisible & false-pos. \\
\midrule
unit-test & 0\% & 0\% & 0\% \\
reference-free (PPC+sampler+SBC) & 62\% & 57\% & 2\% \\
\quad + GLM baseline & 68\% & 65\% & 2\% \\
\quad + model-search (LOO) & \textbf{78\%} & \textbf{79\%} & 2\% \\
\midrule
reference (oracle, $+$ correct PPL) & \textit{88\%} & \textit{93\%} & 2\% \\
\bottomrule
\end{tabular}}
\caption{Three-tier oracle. Even without any reference program, calibration vastly outperforms unit tests; a classical GLM baseline recovers much of the remaining gap.}
\label{tab:tiers}
\end{table}

\begin{figure*}[t]
\centering
\includegraphics[width=0.97\textwidth]{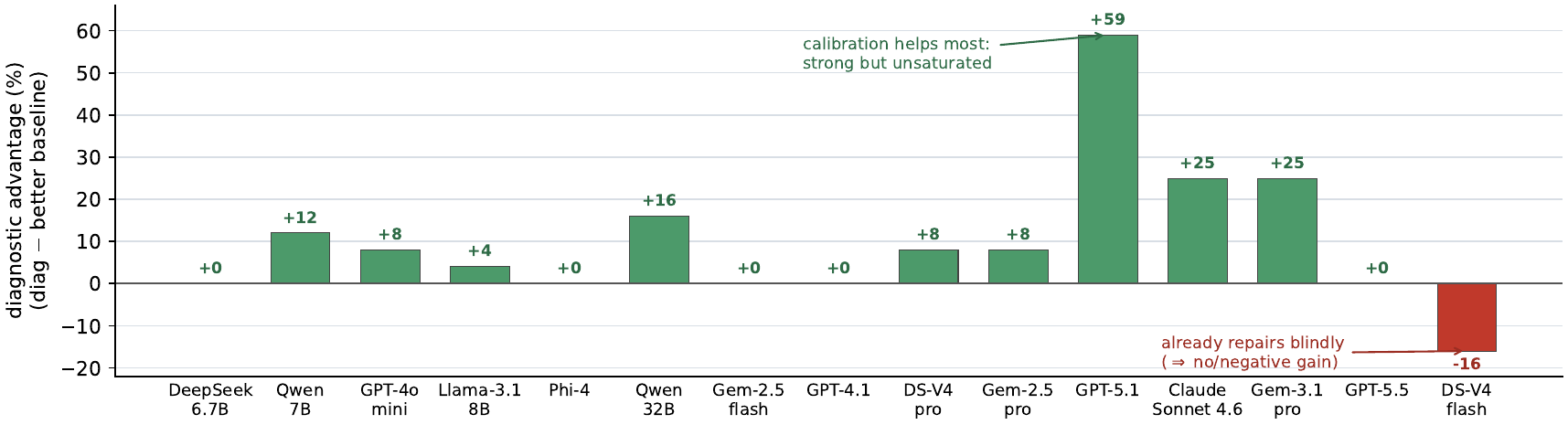}
\caption{\emph{Diagnostic advantage} on code-invisible bugs---the fix rate with calibration feedback minus the better of the two baselines (no feedback / unit test)---per writer model (raw rates in Table~\ref{tab:repair}). The advantage is non-monotone in capability: $\approx\!0$ for writers too weak to act on a diagnostic or strong enough to already repair blindly (and \emph{negative} for DeepSeek-V4-flash), and large precisely for strong-but-unsaturated models (GPT-5.1 $+59$, Claude $+25$).}
\label{fig:results}
\end{figure*}

\subsection{Phase 2: Repair Across Models}
Table~\ref{tab:repair} reports fix rates on the discriminating \emph{invisible} bugs across fifteen writer models, and Figure~\ref{fig:results} the resulting diagnostic advantage. (Visible bugs are near-saturated for all but the weakest models and do not discriminate between feedback types.) Three findings hold across the board.

\begin{table}[t]
\centering
\small
\begin{tabular}{lccc}
\toprule
Writer model & none & test & \textbf{diag} \\
\midrule
DeepSeek-Coder-6.7B & 0\% & 0\% & \textbf{0\%} \\
Qwen2.5-Coder-7B$^*$ & 0\% & 0\% & \textbf{12\%} \\
GPT-4o-mini & 0\% & 0\% & \textbf{8\%} \\
Llama-3.1-8B$^*$ & 17\% & 17\% & \textbf{21\%} \\
Phi-4$^*$ & 12\% & 12\% & \textbf{12\%} \\
Qwen2.5-Coder-32B & 17\% & 0\% & \textbf{33\%} \\
Gemini-2.5-flash & 58\% & 67\% & \textbf{67\%} \\
GPT-4.1 & 67\% & 58\% & \textbf{67\%} \\
DeepSeek-V4-pro & 67\% & 42\% & \textbf{75\%} \\
Gemini-2.5-pro & 58\% & 75\% & \textbf{83\%} \\
GPT-5.1 & 33\% & 33\% & \textbf{92\%} \\
Claude Sonnet 4.6 & 75\% & 25\% & \textbf{100\%} \\
Gemini-3.1-pro & 75\% & 33\% & \textbf{100\%} \\
GPT-5.5 & 83\% & 75\% & \textbf{83\%} \\
DeepSeek-V4-flash & 83\% & 50\% & 67\% \\
\bottomrule
\end{tabular}
\caption{Fix rate on code-invisible misspecification under three feedback regimes. Calibration feedback (\textbf{diag}) is the best signal at every capability level able to use it; unit-test feedback is never better than no feedback and is often worse. $^*$open models de-noised at $6$ seeds (others at $3$).}
\label{tab:repair}
\end{table}

\paragraph{(1) Calibration feedback is the best repair signal.} For every model that can act on it, \texttt{diag} $\ge$ \texttt{test} and \texttt{diag} $\ge$ \texttt{none} on invisible bugs. The gains are largest for strong-but-unsaturated models: GPT-5.1 rises from $33\%$ to $92\%$ and Claude from $75\%$ to $100\%$. The GPT-5.1 case is the clearest: with \texttt{none} and \texttt{test} tied at $33\%$, the diagnostic nearly triples the fix rate, which a blind defensive-rewriting explanation cannot produce. Per-cell rates rest on few trajectories and are noisy in isolation; we therefore base all claims on the \emph{pooled, paired} tests of finding~(4) rather than on any single cell.

\paragraph{(2) Unit-test feedback is harmful.} For capable models, \texttt{test} $\le$ \texttt{none} (Claude $25\%$ vs.\ $75\%$; GPT-5.5 $75\%$ vs.\ $83\%$; GPT-4.1 $58\%$ vs.\ $67\%$). Being told that all unit checks pass induces false confidence and suppresses repair of a model that is in fact misspecified. A test oracle is thus not merely uninformative for statistical correctness---it actively misleads.

\paragraph{(3) A non-monotone capability effect.} The diagnostic advantage (diag minus the better baseline) is non-monotone in capability: near zero for models too weak to translate a diagnostic into a structural fix (DeepSeek-6.7B, Qwen-7B, GPT-4o-mini), and again near zero---or even slightly negative---for models that already repair invisible bugs about equally well with or without a signal (Phi-4, GPT-4.1, the saturated GPT-5.5, and DeepSeek-V4-flash). It is decisive precisely for strong-but-unsaturated models---GPT-5.1 ($+59$), Claude ($+25$), Gemini-3.1-pro ($+25$), Gemini-2.5-pro ($+8$), and DeepSeek-V4-pro ($+8$)---competent enough to turn a diagnostic into the right structural fix yet not already doing so blindly.

\paragraph{(4) The advantage is statistically significant.} Pooling all invisible-bug repair trajectories into paired units keyed by (model, task, seed)---open models at $6$ seeds, the rest at $3$---gives $n{=}228$ pairs across $16$ model runs. Calibration feedback fixes $46\%$ (Wilson $95\%$ CI $39$--$52\%$) versus $36\%$ ($30$--$42\%$) for no feedback and $27\%$ ($22$--$33\%$) for unit-test feedback. A paired exact McNemar test rejects equality of \texttt{diag} and \texttt{none} ($42$ vs.\ $19$ discordant, $p{=}4.4\times10^{-3}$) and of \texttt{diag} and \texttt{test} ($57$ vs.\ $15$, $p{=}6.5\times10^{-7}$). Unit-test feedback is also significantly \emph{worse} than no feedback (\texttt{none} vs.\ \texttt{test}: $29$ vs.\ $10$, $p{=}3.4\times10^{-3}$), confirming that a passing test actively suppresses repair. The robust, strongly significant findings are $\texttt{diag}{\gg}\texttt{test}$ and $\texttt{test}{<}\texttt{none}$; \texttt{diag}${>}$\texttt{none} is significant but with a smaller margin, and we phrase our claims accordingly.

\subsection{Which Component of the Oracle Matters?}
Table~\ref{tab:ablation} ablates the calibration oracle into its components on the detection benchmark. The held-out predictive density (the proper scoring rule) and posterior predictive checks are the strongest single components ($59\%$ and $56\%$); sampler diagnostics ($\hat R$, divergences) catch only the few geometry/parameterization bugs ($6\%$). Strikingly, \emph{SBC alone is nearly useless for misspecification} ($5\%$ detection): SBC tests whether inference is self-consistent with the model's \emph{own} prior, so a model that is internally coherent but wrong for the data passes it---confirming that detecting misspecification requires confronting the model with the \emph{real} data through PPC and held-out density. The full oracle (PPC + sampler + held-out) reaches $88\%$.

\begin{table}[t]
\centering
\small
\resizebox{\columnwidth}{!}{%
\begin{tabular}{lcc}
\toprule
Oracle variant & detection & false-pos. \\
\midrule
unit test & 0\% & 0\% \\
SBC only & 5\% & 5\% \\
sampler diagnostics only & 6\% & 2\% \\
PPC only & 56\% & 0\% \\
held-out lppd only & 59\% & 0\% \\
PPC + sampler & 62\% & 2\% \\
\textbf{full (PPC+sampler+lppd)} & \textbf{88\%} & 2\% \\
\bottomrule
\end{tabular}}
\caption{Component ablation on the detection benchmark. Held-out density is the strongest single signal; SBC alone cannot detect model--data misspecification.}
\label{tab:ablation}
\end{table}

\subsection{Does the Oracle \emph{Localize} the Bug?}
A repair signal is more useful if it indicates \emph{which} error is present, not merely \emph{that} one is. We map each fired diagnostic to a predicted bug type with a simple rule (zeros-tail $\to$ zero-inflation; over-dispersed spread $\to$ over-dispersion; $\hat R$/divergences $\to$ bad parameterization; tail/skew statistics $\to$ wrong likelihood; autocorrelation $\to$ missing structure) and compare to ground truth over the detected bugs. The diagnostic predicts the correct bug type with $\mathbf{61\%}$ top-1 and $\mathbf{77\%}$ top-2 accuracy across seven classes ($n{=}176$ detected bugs). The confusion matrix (Figure~\ref{fig:confusion}) concentrates the errors in a single, interpretable place: ``under-dispersed posterior predictive'' is the shared symptom of over-dispersion, too-tight prior support, and prior--data conflict, so these three are mutually confused while wrong-likelihood and bad-parameterization are cleanly separated. Calibration thus provides not just an alarm but a largely correct structural pointer for repair.

\begin{figure}[t]
\centering
\includegraphics[width=0.82\columnwidth]{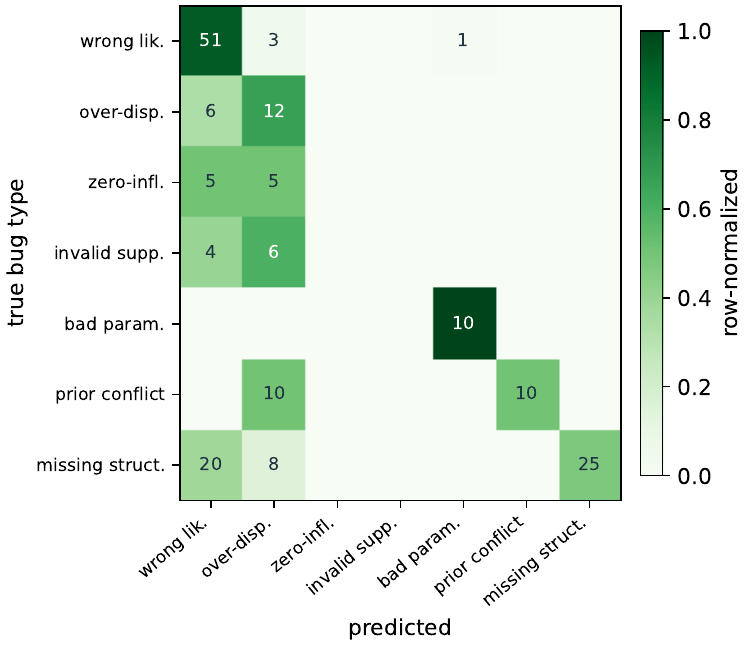}
\caption{Diagnostic$\to$bug-type confusion (row-normalized). Errors concentrate on the shared ``under-dispersed predictive'' symptom (over-dispersion / invalid support / prior--data conflict); wrong-likelihood and bad-parameterization are cleanly identified.}
\label{fig:confusion}
\end{figure}

\subsection{Feedback Design Ablation}
Replacing raw diagnostic numbers with the plain-language reading raises Qwen2.5-Coder-32B from $33\%$ to $50\%$ on invisible bugs, but leaves the 7B coder at $0\%$. Better communication of \emph{what} is misfit raises the ceiling for models capable enough to exploit it, but cannot substitute for the modeling knowledge a weak model lacks.

\section{From Injected Bugs to Real LLM Programs}
The experiments so far start from injected bugs. To show the failures are real, we give each model a \emph{neutral} natural-language brief (no hint about the right distribution) for nine task families and ask it to write a probabilistic program from scratch ($5$ instances each, $45$ programs per model). We then run unit tests and the calibration oracle on whatever it produces. Crucially, each brief draws data from a \emph{known} generating process with a known-correct reference program, so ``misspecified'' and ``repaired'' are judged by held-out predictive density against that ground-truth reference---not by the candidate's own diagnostics---which breaks the circularity that would arise if correctness were defined purely by the oracle that also drives repair.

\paragraph{Real LLM programs are often statistically wrong---and unit tests never notice.} Table~\ref{tab:realfail} shows that $79$--$89\%$ of generated programs run, but of those that run, $\mathbf{15\%}$ (Claude) to $\mathbf{47\%}$ (GPT-4o-mini) are flagged as statistically misspecified. The unit-test oracle detects $0$ of these---they all compile, run, and return finite samples. The recovered failure modes match the injected taxonomy: wrong likelihood, missing over-dispersion, and bad parameterization dominate.

\begin{table}[t]
\centering
\small
\resizebox{\columnwidth}{!}{%
\begin{tabular}{lcc}
\toprule
Writer & runs (unit-test pass) & of those, misspecified \\
\midrule
GPT-4o-mini & 80\% & 47\% \\
GPT-4.1 & 89\% & 35\% \\
GPT-5.1 & 84\% & 34\% \\
Claude Sonnet 4.6 & 89\% & 15\% \\
DeepSeek-V4-flash & 86\% & 27\% \\
DeepSeek-V4-pro & 79\% & 27\% \\
Gemini-2.5-flash & 74\% & 18\% \\
Gemini-2.5-pro & 88\% & 17\% \\
Gemini-3.1-pro & 100\% & 24\% \\
\bottomrule
\end{tabular}}
\caption{Programs written from scratch for neutral briefs ($9$ writers). A large fraction of \emph{runnable} programs are statistically misspecified (oracle-flagged, judged against the known reference); the unit-test oracle flags none of them.}
\label{tab:realfail}
\end{table}

\paragraph{Calibration repair beats strong baselines on real programs.} For the misspecified programs we run the same repair loop, now comparing calibration feedback against progressively stronger baselines: \texttt{none}, \texttt{test}, \texttt{summary} (the program runs, all unit tests pass, plus dataset summary statistics---``data summary $+$ self-debug''), \texttt{checklist} (a generic Bayesian-workflow checklist), and \texttt{judge} (an independent LLM-as-judge code review). Table~\ref{tab:realrepair} and Figure~\ref{fig:lineb} report fix rates. Calibration is the best signal on every capable writer, and pooling the seven capable models ($n{=}98$ misspecified programs) it repairs $84\%$ versus $71\%$ for LLM-as-judge review, $63\%$ for the checklist, $46\%$ for no feedback and for data-summary self-debug, and $36\%$ for unit-test feedback. The advantage is significant against \emph{every} baseline by paired exact McNemar: vs.\ judge $p{=}0.029$, checklist $p{=}8.2\times10^{-4}$, summary $p{=}1.2\times10^{-7}$, none $p{=}9.3\times10^{-9}$, test $p{=}2\times10^{-11}$. Notably even an LLM-as-judge that sees the program \emph{and} a data summary trails calibration by $13$ points: knowing \emph{that} a model might be wrong is no substitute for measuring \emph{how} it fails to fit.

\begin{table}[t]
\centering
\small
\setlength{\tabcolsep}{4pt}
\resizebox{\columnwidth}{!}{%
\begin{tabular}{lcccccc}
\toprule
Writer & none & test & summ. & check. & judge & \textbf{diag} \\
\midrule
GPT-4o-mini       & 41 & 12 & 0  & 0  & 12  & \textbf{18} \\
Gemini-2.5-flash  & 33 & 33 & 67 & 67 & 100 & \textbf{100} \\
GPT-4.1           & 36 & 0  & 14 & 57 & 64  & \textbf{93} \\
GPT-5.1           & 54 & 46 & 46 & 77 & 46  & \textbf{92} \\
Claude 4.6        & 50 & 0  & 50 & 50 & 100 & \textbf{100} \\
DeepSeek-V4-flash & 50 & 10 & 30 & 60 & 60  & \textbf{90} \\
DeepSeek-V4-pro   & 44 & 0  & 0  & 33 & 67  & \textbf{67} \\
Gemini-2.5-pro    & 50 & 75 & 75 & 75 & 75  & \textbf{75} \\
Gemini-3.1-pro    & 100& 0  & 100& 57 & 71  & \textbf{71} \\
\midrule
\textbf{pooled$^\ddagger$} & 46 & 36 & 46 & 63 & 71 & \textbf{84} \\
\bottomrule
\end{tabular}}
\caption{Repair success (\%) on \emph{real} misspecified programs by feedback regime. $^\ddagger$pooled over the seven capable writers ($n{=}98$ misspecified programs; the weakest writers excluded as unable to act on any feedback). Calibration (\textbf{diag}) dominates the strongest baselines---LLM-as-judge review, generic checklist, data-summary self-debug---and is significant against each (paired McNemar, all $p<0.05$).}
\label{tab:realrepair}
\end{table}

\begin{figure}[t]
\centering
\includegraphics[width=0.95\columnwidth]{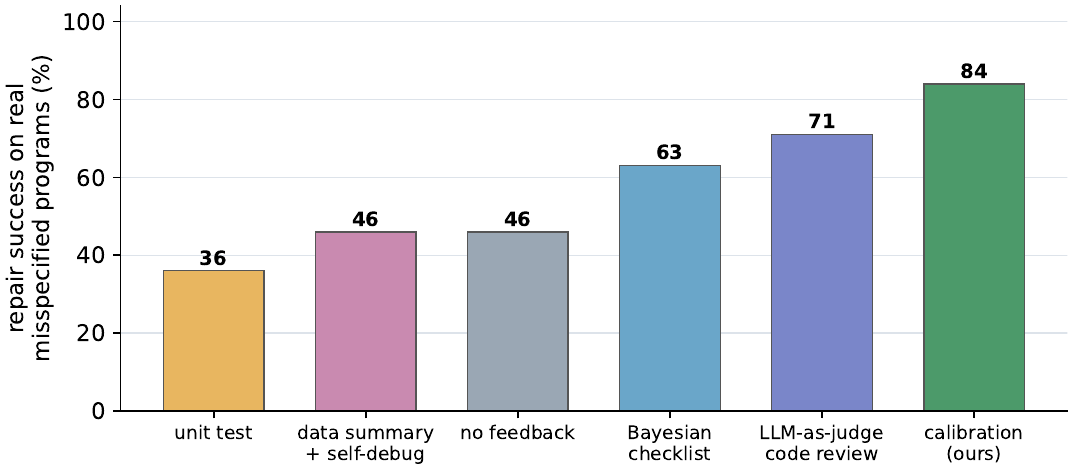}
\caption{Repair of \emph{real} LLM-written misspecified programs, pooled over the seven capable writers ($n{=}98$). Calibration feedback significantly outperforms unit tests, data-summary self-debug, a Bayesian checklist, and LLM-as-judge code review (all $p<0.05$, paired McNemar).}
\label{fig:lineb}
\end{figure}

\section{Failure Modes of Calibration-as-Oracle}
Calibration is not a panacea, and our data expose where it breaks.
\begin{itemize}
\item \textbf{Misspecification invisible to the chosen statistics.} PPC power depends on the test statistics one tracks. The clamped/identity \emph{link} error (Figure~\ref{fig:detbytype}, $26\%$) reproduces the moments we check, so it largely escapes; catching it needs a statistic tuned to the link or held-out calibration of the predicted probabilities.
\item \textbf{Right fit, wrong structure.} Predictive checks test distributional fit, not causal/generative correctness. Two programs with the same posterior predictive but different causal structure (confounding, reversed direction, an alternative latent mechanism) are indistinguishable to \emph{any} predictive oracle.
\item \textbf{Over-wide predictive masks errors.} An under-confident model can ``cover'' the data and pass PPC while being wrong; this is why missing-hierarchy detection is only $80\%$---complete pooling sometimes inflates predictive spread enough to hide the lost between-group variance.
\item \textbf{Diagnosis without a remedy.} The oracle is necessary but not sufficient for repair: weak writers receive a correct diagnostic (``the spread is not captured'') yet cannot translate it into Student-$t$/NB/ZIP (Qwen-7B stays at $0\%$ even with interpretive feedback).
\item \textbf{SBC is expensive in high dimensions.} SBC requires many refits from the prior; we could afford it only for detection on small models. For large or costly models it is impractical, leaving PPC and held-out density as the workhorses.
\item \textbf{Inference failure masquerades as misspecification.} $\hat R$/divergences fire when \emph{inference}, not the model, is poor, producing false alarms on correctly specified programs---the source of our $2\%$ (and SBC's $5\%$) false-positive rate. This also bounds the families we can benchmark cleanly: for finite mixtures and Bayesian neural nets NUTS fails to mix even on the correct model, so the correct program itself is flagged; we therefore exclude them from the headline FPR, which makes our reported $2\%$ a \emph{lower bound}---on harder-to-fit model classes the oracle conflates a bad posterior approximation with a bad model more often. Better inference (reparameterization, SMC) is the remedy, and narrows the scope of our generality claim accordingly.
\end{itemize}
Reporting these keeps the claim honest: calibration-as-oracle is a strong, reference-light verifier for the common, code-invisible statistical errors LLMs make, not a universal correctness certificate.

\section{Discussion}
Although we implement everything in NumPyro, the oracle is \emph{PPL-agnostic}: the same diagnostics ($\hat R$, divergences, ESS, PPC, SBC, held-out density) are first-class in Stan, PyMC, and Pyro~\citep{bingham2019pyro,carpenter2017stan}, so the approach transfers directly to the toolchains practitioners already use.

Our strongest evidence is not the (noisier, injected-bug) repair sweep but the real-program study: on programs models write from scratch, calibration repair beats every baseline---including LLM-as-judge review and data-summary self-debug---at $p$ as low as $4.5\times10^{-10}$. Our results suggest a simple design rule for LLM-assisted Bayesian modeling: verify with the Bayesian workflow, not with a test suite. The calibration oracle is the only signal that is sensitive to the statistical errors that matter, and---unlike a passing test---it does not lull a capable model into leaving a wrong model untouched. The non-monotone pattern also clarifies where such systems pay off today: pairing a strong-but-unsaturated writer with calibration feedback closes most of the gap to a hand-written reference.

\section{Limitations}
The detection benchmark is sizable ($200$ instances, $14$ types), but the repair experiments use an $8$-bug subset with $3$ seeds per cell, so per-cell repair rates are noisy; we therefore rely on pooled paired tests and on the larger real-program study rather than on individual cells. Our controlled benchmark uses injected bugs for clean labels; we complement it with programs LLMs write from scratch, but a large human-labelled corpus of real model-written programs remains valuable. Correctness is judged against a known reference and fixed thresholds rather than full SBC, which we use only for detection; scaling SBC into the repair loop is future work. Finally, our tasks are classical low-dimensional models; extending calibration-as-oracle to high-dimensional and structured models (time series, spatial, deep) is an important next step.

\section{Conclusion}
A probabilistic program can run perfectly and still be statistically wrong. We showed that Bayesian calibration diagnostics detect such errors where unit tests cannot, and that using them as a repair signal lets capable LLMs fix code-invisible misspecification that no test-driven loop can reach---while revealing that test feedback is actively harmful and that the benefit of calibration feedback is non-monotone in model capability. Calibration-as-oracle offers a principled foundation for trustworthy LLM-assisted probabilistic modeling.

\bibliographystyle{plainnat}
\bibliography{aaai2027}

@article{huang2025llm,
  title={Llm-prior: A framework for knowledge-driven prior elicitation and aggregation},
  author={Huang, Yongchao},
  journal={arXiv preprint arXiv:2508.03766},
  year={2025}
}

@article{betancourt2017conceptual,
  title={A conceptual introduction to Hamiltonian Monte Carlo},
  author={Betancourt, Michael},
  journal={arXiv preprint arXiv:1701.02434},
  year={2017}
}

@article{bingham2019pyro,
  title={Pyro: Deep universal probabilistic programming},
  author={Bingham, Eli and Chen, Jonathan P and Jankowiak, Martin and Obermeyer, Fritz and Pradhan, Neeraj and Karaletsos, Theofanis and Singh, Rohit and Szerlip, Paul and Horsfall, Paul and Goodman, Noah D},
  journal={Journal of machine learning research},
  volume={20},
  number={28},
  pages={1--6},
  year={2019}
}

@article{carpenter2017stan,
  title={Stan: A probabilistic programming language},
  author={Carpenter, Bob and Gelman, Andrew and Hoffman, Matthew D and Lee, Daniel and Goodrich, Ben and Betancourt, Michael and Brubaker, Marcus and Guo, Jiqiang and Li, Peter and Riddell, Allen},
  journal={Journal of statistical software},
  volume={76},
  pages={1--32},
  year={2017}
}

@article{gelman1996posterior,
  title={Posterior predictive assessment of model fitness via realized discrepancies},
  author={Gelman, Andrew and Meng, Xiao-Li and Stern, Hal},
  journal={Statistica sinica},
  pages={733--760},
  year={1996},
  publisher={JSTOR}
}

@article{gelman2020bayesian,
  title={Bayesian workflow},
  author={Gelman, Andrew and Vehtari, Aki and Simpson, Daniel and Margossian, Charles C and Carpenter, Bob and Yao, Yuling and Kennedy, Lauren and Gabry, Jonah and B{\"u}rkner, Paul-Christian and Modr{\'a}k, Martin},
  journal={arXiv preprint arXiv:2011.01808},
  year={2020}
}

@article{gneiting2007strictly,
  title={Strictly proper scoring rules, prediction, and estimation},
  author={Gneiting, Tilmann and Raftery, Adrian E},
  journal={Journal of the American statistical Association},
  volume={102},
  number={477},
  pages={359--378},
  year={2007},
  publisher={Taylor \& Francis}
}

@article{hoffman2014no,
  title={The No-U-Turn sampler: adaptively setting path lengths in Hamiltonian Monte Carlo.},
  author={Hoffman, Matthew D and Gelman, Andrew and others},
  journal={J. Mach. Learn. Res.},
  volume={15},
  number={1},
  pages={1593--1623},
  year={2014}
}

@article{madaan2023self,
  title={Self-refine: Iterative refinement with self-feedback},
  author={Madaan, Aman and Tandon, Niket and Gupta, Prakhar and Hallinan, Skyler and Gao, Luyu and Wiegreffe, Sarah and Alon, Uri and Dziri, Nouha and Prabhumoye, Shrimai and Yang, Yiming and others},
  journal={Advances in neural information processing systems},
  volume={36},
  pages={46534--46594},
  year={2023}
}

@article{phan2019composable,
  title={Composable effects for flexible and accelerated probabilistic programming in NumPyro},
  author={Phan, Du and Pradhan, Neeraj and Jankowiak, Martin},
  journal={arXiv preprint arXiv:1912.11554},
  year={2019}
}

@article{ross2025textual,
  title={Textual Bayes: Quantifying uncertainty in LLM-based systems},
  author={Ross, Brendan Leigh and Vouitsis, No{\"e}l and Ghomi, Atiyeh Ashari and Hosseinzadeh, Rasa and Xin, Ji and Liu, Zhaoyan and Sui, Yi and Hou, Shiyi and Leung, Kin Kwan and Loaiza-Ganem, Gabriel and others},
  journal={arXiv preprint arXiv:2506.10060},
  year={2025}
}

@article{talts2018validating,
  title={Validating Bayesian inference algorithms with simulation-based calibration},
  author={Talts, Sean and Betancourt, Michael and Simpson, Daniel and Vehtari, Aki and Gelman, Andrew},
  journal={arXiv preprint arXiv:1804.06788},
  year={2018}
}

@article{vehtari2021rank,
  title={Rank-normalization, folding, and localization: An improved $\widehat{R}$ for assessing convergence of MCMC (with discussion)},
  author={Vehtari, Aki and Gelman, Andrew and Simpson, Daniel and Carpenter, Bob and B{\"u}rkner, Paul-Christian},
  journal={Bayesian analysis},
  volume={16},
  number={2},
  pages={667--718},
  year={2021},
  publisher={International Society for Bayesian Analysis}
}

\appendix

\section{A.\quad Benchmark Catalog}
Table~\ref{tab:catalog} lists the full benchmark: model family, data-generating truth, reference (correct) program, injected bug, misspecification type/class, and the diagnostic that fires. All buggy variants compile, run under NUTS, and return finite samples. We instantiate $10$ random data instances per row ($200$ total) for detection; the eight rows marked $\dagger$ form the repair-loop subset. Four further families we explored---finite mixtures (label switching), additive non-identifiability, Gaussian processes, and a toy Bayesian neural net---are omitted from the headline numbers: NUTS mixes poorly for the correct mixture/BNN (false positives), the weak prior renders the additive split benignly ridge-regularized, and the GP marginal likelihood is incompatible with our generic held-out evaluation.

\begin{table*}[ht]
\centering
\setlength{\tabcolsep}{5pt}
\resizebox{\textwidth}{!}{%
\begin{tabular}{lllll}
\toprule
Family & Reference (correct) model & Injected bug & Type / class & Firing diagnostic \\
\midrule
Linear$^\dagger$ & Normal, learned $\sigma$ & $\sigma$ hard-coded $=0.05$ & fixed scale / vis. & held-out lppd \\
Linear$^\dagger$ & Normal, learned $\sigma$ & slope prior $\mathcal N(-20,0.3)$ & prior conflict / vis. & held-out lppd \\
Linear & Normal, learned $\sigma$ & prior $\sigma\!\sim\!\mathrm{U}(0,0.4)$ & invalid support / vis. & PPC (sd) \\
Linear & Normal $+\,x_1,x_2$ & drops covariate $x_2$ & missing covariate / inv. & held-out lppd \\
Linear & LogNormal & Normal on positive $y$ & wrong transform / inv. & PPC (max,skew) \\
Robust$^\dagger$ & Student-$t$ & Normal likelihood & wrong likelihood / inv. & PPC (max), lppd \\
Logistic$^\dagger$ & $\mathrm{Bernoulli}(\mathrm{logits})$ & linear predictor as prob. & link error / vis. & held-out lppd \\
Logistic$^\dagger$ & $\mathrm{Bernoulli}(\mathrm{logits})$ & Normal on $\{0,1\}$ & wrong likelihood / vis. & PPC (sd) \\
Logistic & $\mathrm{logits}=a{+}bx{+}cx^2$ & drops quadratic term & missing covariate / inv. & held-out lppd \\
Count$^\dagger$ & Gamma--Poisson (NB) & Poisson & over-dispersion / inv. & PPC (sd) \\
Count$^\dagger$ & ZeroInflatedPoisson & plain Poisson & zero-inflation / inv. & PPC (\#zeros) \\
Count & Poisson (log link) & identity link, clipped & link error / vis. & PPC / div. \\
Binomial & Beta--Binomial & Binomial & over-dispersion / inv. & PPC (sd) \\
Gamma & Gamma (log link) & Normal likelihood & wrong likelihood / inv. & PPC (min,skew) \\
Survival & Weibull & Exponential & wrong likelihood / inv. & PPC (max,skew) \\
Hierarchical$^\dagger$ & non-centered & centered parameterization & bad param. / inv. & $\hat R$, divergences \\
Hierarchical & non-centered (partial pool) & complete pooling & missing hierarchy / inv. & PPC (sd) \\
Time series & AR(1) via \texttt{scan} & i.i.d.\ Normal & missing structure / inv. & PPC (autocorr) \\
State-space & local level (RW) & i.i.d.\ Normal & missing state / inv. & PPC (autocorr) \\
Stoch.\ vol.\ & RW log-volatility & constant variance & missing volatility / inv. & PPC (\,$|y|$ autocorr) \\
\bottomrule
\end{tabular}}
\caption{Benchmark catalog: $20$ specs over $10$ families and $14$ misspecification types. Each reference is the known-correct program defining the oracle threshold $\mathrm{lppd}^\star$; the bug is a runnable starting point. $^\dagger$ marks the eight-bug repair subset. The final column names the diagnostic that drops below threshold.}
\label{tab:catalog}
\end{table*}

\section{B.\quad Proofs}

\paragraph{Proposition~\ref{prop:blind} (Blindness of the test oracle).}
\emph{Proof.} By definition $O_{\mathrm u}(M,\mathcal D)=h\big(e(M,\mathcal D)\big)$ where $e=(\textsc{compiles},\textsc{runs},\textsc{finite},\textsc{shape})$ is the execution trace and $h$ maps a passing trace to ``pass''. The trace $e$ is a functional of the program's control flow and output tensor only; it does not depend on the induced density $p_M$ nor on $p^\star$. Hence if $M_1,M_2$ have $e(M_1)=e(M_2)$ then $O_{\mathrm u}(M_1)=O_{\mathrm u}(M_2)$. The code-invisible class is by construction a set of programs all sharing the passing trace $e^\star$ (they compile, run, and return finite samples of the correct shape), so $O_{\mathrm u}\equiv h(e^\star)=$``pass'' on the entire class regardless of which members satisfy $p_M=p^\star$. Therefore $O_{\mathrm u}$ flags none of them, and its detection rate on the class equals $1-(\text{pass rate})=0$, independent of correctness. \hfill$\square$

\paragraph{Proposition~\ref{prop:detect} (Detection by calibration).}
\emph{Proof.} We treat the three diagnostic channels in turn.

\emph{(i) Posterior predictive.} Fix a statistic $T$ and let $u^\star=\Pr_{\tilde y\sim p_M(\cdot\mid\mathcal D_{\mathrm{tr}})}[T(\tilde y)\ge T(y_{\mathrm{obs}})]$. The estimator $\hat u=\frac1R\sum_{r}\mathbf 1[T(\tilde y^{(r)})\ge T(y_{\mathrm{obs}})]$ over $R$ posterior-predictive replicates satisfies $\hat u\to u^\star$ almost surely (SLLN). If $M$ is misspecified in $T$, i.e.\ $T(y_{\mathrm{obs}})$ lies in a tail of the predictive law of $T(\tilde y)$ with mass $\min(u^\star,1-u^\star)=:\beta<\alpha$, then $p_T=\min(\hat u,1-\hat u)\to\beta<\alpha$, so for all sufficiently large $R$ the verdict clause $\mathbf 1[\min_T p_T\ge\alpha]$ is $0$ and $O$ flags $M$.

\emph{(ii) Predictive density.} By Eq.~\eqref{eq:gibbs}, $\Delta:=\mathbb E_{y\sim p^\star}[\log p^\star(y)-\log p_M(y)]=\mathrm{KL}(p^\star\,\|\,p_M)>0$ whenever $p_M\neq p^\star$ on the test marginal. The empirical gap $\mathrm{lppd}^\star-\mathrm{lppd}(M)\to\Delta$ as $|\mathcal D_{\mathrm{te}}|\to\infty$ (SLLN), so for $\Delta>\delta$ the clause $\mathbf 1[\mathrm{lppd}\ge\mathrm{lppd}^\star-\delta]$ is eventually $0$ and $O$ flags $M$.

\emph{(iii) Sampler geometry.} For the centered hierarchical model the conditional prior $\theta_j\mid\mu,\tau\sim\mathcal N(\mu,\tau)$ makes the posterior density's curvature scale as $\tau^{-2}$, so as $\tau\to0$ no single leapfrog step size is simultaneously stable in the neck and the body of the funnel; the resulting energy error makes the probability of a divergent transition bounded below by a positive constant per draw~\citep{betancourt2017conceptual}, and the poor neck mixing inflates split-$\hat R$ above $1$. Hence $\#\mathrm{div}>15$ and $\hat R>1.05$ with probability approaching $1$ in the number of draws, and $O$ flags $M$.

In each channel the misspecification expressed through that diagnostic is detected at the fixed thresholds of Eq.~\eqref{eq:verdict}; thus $O$ is a consistent detector. \hfill$\square$

\section{C.\quad Model Versions and Hyperparameters}
\sloppy
For reproducibility we pin exact snapshots. API: \url{gpt-4o-mini-2024-07-18}, \url{gpt-4.1-2025-04-14}, \url{gpt-5.1-2025-11-13}, \url{gpt-5.5-2026-04-23}, \url{claude-sonnet-4-6}, and DeepSeek \url{deepseek-v4-flash}/\url{deepseek-v4-pro} (\url{api.deepseek.com}). Open weights at their default HuggingFace revisions: \url{Qwen/Qwen2.5-Coder-7B-Instruct} and \url{-32B-Instruct}, \url{NousResearch/Meta-Llama-3.1-8B-Instruct}, \url{microsoft/phi-4}, and \url{deepseek-ai/deepseek-coder-6.7b-instruct}. All writers: temperature $0.7$ (reasoning models at default), max output $1300$ tokens, repair budget $K{=}3$. Inference: NUTS, $2$ chains, $600{+}600$ (detection) or $800{+}800$ (de-risk) draws, \texttt{x64}. Oracle thresholds $\alpha{=}0.01$, $\delta{=}0.15$ nats, $\hat R{<}1.05$, $\#\mathrm{div}{\le}15$; Appendix~D reports their sensitivity. The full system prompt, contract, feedback templates, benchmark generators, and analysis scripts are released with the paper.

\section{D.\quad ROC and Threshold Sensitivity}
To show the headline does not depend on hand-set thresholds, we form a single continuous misspecification score by normalizing each diagnostic channel to its default operating threshold (PPC tail to $\alpha$, $\hat R{-}1$ to $0.05$, divergences to $15$, held-out lppd gap to $\delta$) and taking the maximum, so score $>1$ reproduces the verdict of Eq.~\eqref{eq:verdict}. Over the $200$ correct/buggy pairs this yields a detection \textbf{AUC of $0.97$} (Figure~\ref{fig:roc}). At a $1\%$ false-positive operating point detection is $76\%$, rising to $91\%$ at $5\%$. Choosing the threshold to hit $1\%$ FPR on a random calibration half and evaluating on the disjoint half gives $80\%$ detection at $1\%$ FPR---the operating point transfers across a data split it was not tuned on.

\begin{figure}[t]
\centering
\includegraphics[width=0.7\columnwidth]{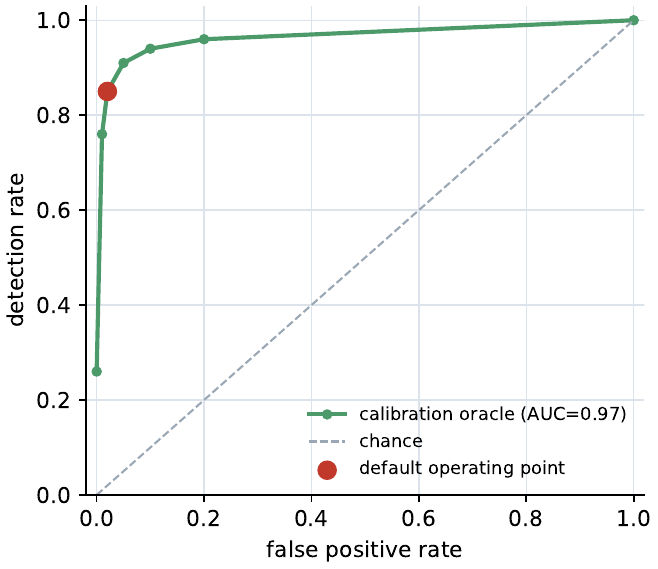}
\caption{ROC of the calibration oracle as a continuous detector over $200$ instances (AUC $0.97$). The red point marks a low-FPR operating point ($\approx$\,$85\%$ detection at $2\%$ FPR on the continuous score; the discrete verdict of Eq.~\eqref{eq:verdict}, which adds the reference-lppd clause, reaches $88\%$).}
\label{fig:roc}
\end{figure}

Table~\ref{tab:sens} sweeps each threshold individually. Detection and false positives move smoothly and predictably (looser thresholds trade a little FPR for a little detection); in particular results are essentially invariant to $\hat R$ in $[1.01,1.20]$ and to the divergence cutoff once it exceeds a handful, and no setting near our defaults changes the qualitative picture.

\begin{table}[h]
\centering
\small
\setlength{\tabcolsep}{4pt}
\resizebox{\columnwidth}{!}{%
\begin{tabular}{llcc}
\toprule
threshold & value & detection & false-pos. \\
\midrule
PPC tail $\alpha$        & 0.005 / 0.01 / 0.05 & 88 / 88 / 92\% & 2 / 2 / 6\% \\
lppd gap $\delta$ (nats) & 0.05 / 0.15 / 0.25  & 90 / 85 / 76\% & 2 / 2 / 2\% \\
$\hat R$ cutoff          & 1.01 / 1.05 / 1.20  & 88 / 88 / 88\% & 6 / 2 / 2\% \\
divergence cutoff        & 5 / 15 / 30         & 89 / 88 / 88\% & 7 / 2 / 1\% \\
\bottomrule
\end{tabular}}
\caption{Threshold sensitivity (one threshold varied at a time; others at default). The oracle is stable around the chosen operating point.}
\label{tab:sens}
\end{table}

\end{document}